\newcommand{\beq}{\begin{equation}} \newcommand{\eeq}{\end{equation}}
\newcommand{\la}{{\langle}}
\newcommand{\ra}{{\rangle}}
\newcommand{\ka}{{{\kappa}}}
\newcommand{\reg}{{\omega}}
\newcommand{\vp}{{\varphi}}
\newcommand{\sign}{{\rm sign}}
\newcommand{\tgamma}{{\tilde \gamma}}
\newcommand{\bi}{\begin{itemize}}
\newcommand{\be}{\begin{enumerate}}
\newcommand{\ei}{\end{itemize}}
\newcommand{\ee}{\end{enumerate}}
\newcommand{\R}{{\mathbb R}}
\newcommand{\N}{{\mathbb N}}
\newcommand{\lam}{{\lambda}}
\def\boldf#1{\hbox{\rlap{$#1$}\kern.4pt{$#1$}}}
\newcommand{\trans}{^{\scriptscriptstyle \top}}
\newcommand{\rd}{\R^d}
\DeclareMathOperator\prox{prox}
\DeclareMathOperator\argmin{argmin}
\newcommand{\uh}{\hat{y}}
\begin{document}

\title{Efficient First Order Methods for Linear Composite Regularizers} 

\author{
{\bf Andreas Argyriou} \\
Toyota Technological Institute at Chicago, University of Chicago \\
6045 S. Kenwood Ave.  Chicago, Illinois 60637, USA \and
{\bf Charles A. Micchelli\footnote{Also with Department of Mathematics and Statistics, 
University at Albany, Earth Science 110 Albany, NY 12222, USA.}} \\
Department of Mathematics, City University of Hong Kong \\ 
83 Tat Chee Avenue Kowloon Tong, Hong Kong \and
{\bf  Massimiliano Pontil}\\
Department of Computer Science, University College London \\
Malet Place
London WC1E 6BT, UK \and 
{\bf Lixin Shen} \\
Department of Mathematics, Syracuse University \\
215 Carnegie Hall
Syracuse, NY 13244-1150, USA \and
{\bf Yuesheng Xu} \\
Department of Mathematics, Syracuse University \\
215 Carnegie Hall Syracuse, NY 13244-1150, USA
}

\maketitle

\begin{abstract}
A wide class of regularization problems in machine learning and
statistics employ a regularization term which is obtained by composing
a simple convex function $\omega$ with a linear transformation. This
setting includes Group Lasso methods, the Fused Lasso and other total
variation methods, multi-task learning methods and many more. In this
paper, we present a general approach for computing the proximity
operator of this class of regularizers, under the assumption that the
proximity operator of the function $\omega$ is known in advance. Our
approach builds on a recent line of research on optimal first order
optimization methods and uses fixed point iterations for numerically
computing the proximity operator. It is more general than
current approaches and, as we show with numerical simulations,
computationally more efficient than available first order methods
which do not achieve the optimal rate. In particular, our method
outperforms state of the art $O(\frac{1}{T})$ methods for overlapping Group
Lasso and matches optimal $O(\frac{1}{T^2})$ methods for the Fused Lasso
and tree structured Group Lasso.
\end{abstract}

\newcommand{\bproof}{\begin{proof}}
\newcommand{\eproof}{\end{proof}}
\newenvironment{essential}{\begin{minipage}[t]{11.4cm}
\textwidth=15.2cm\parskip=5mm\vspace{0.2cm}}{\vspace{0.5cm}\end{minipage}}

\renewcommand{\theequation}{\thesection.\arabic{equation}}
\numberwithin{equation}{section}
\renewcommand{\bar}{\overline}
\newtheorem{theorem}{Theorem}[section]
\newtheorem{question}{Question}[section]
\newtheorem{proposition}{Proposition}[section]
\newtheorem{lemma}{Lemma}[section]
\newtheorem{corollary}{Corollary}[section]
\newtheorem{definition}{Definition}[section]
\newtheorem{problem}{\em Problem}[section]
\newtheorem{remark}{Remark}[section]
\newtheorem{example}{Example}[section]
\newtheorem{case}{Case}[section]
\newtheorem{assumption}{Assumption}[section]

\renewcommand\proofname{\bf Proof} 
\def\eop{$\rule{1.3ex}{1.3ex}$}
\renewcommand\qedsymbol\eop  
\numberwithin{equation}{section}
\makeatletter



\section{Introduction}
In this paper, we study supervised learning methods which are based on
the optimization problem
\beq
\min_{x \in \R^d} f(x)+g(x)
\label{eq:opt}
\eeq
where the function $f$ measures the fit of a vector $x$ to available
training data and $g$ is a penalty term or regularizer which
encourages certain types of solutions. More precisely we let $f(x)=
E(y,Ax)$, where $E: \R^s \times \R^s \rightarrow [0,\infty)$ is an
error function, $y\in \R^s$ is vector of measurements and $A \in \R^{s
\times d}$ a matrix, whose rows are the input vectors. This class of
regularization methods arise in machine learning, signal processing
and statistics and have a wide range of applications.

 
Different choices of the error function and the penalty function
correspond to specific methods. In this paper, we are interested in
solving problem \eqref{eq:opt} when $f$ is a {\em strongly smooth
convex} function (such as the square error $E(y,Ax) =\|y-Ax\|_2^2$) and the penalty function
$g$ is obtained as the composition of a ``simple'' function with a
linear transformation $B$, that is,
\beq
g(x) = \omega(Bx)
\label{eq:comp-reg}
\eeq
where $B$ is a prescribed $m \times d$ matrix and $\omega$ is a 
{\em nondifferentiable convex} function on $\R^d$. 
The class of regularizers \eqref{eq:comp-reg}
includes a plethora of methods, depending on the choice of the
function $\omega$ and of matrix $B$. Our motivation for studying
this class of penalty functions arises from sparsity-inducing
regularization methods which consider $\omega$ to be either the
$\ell_1$ norm or a mixed $\ell_1$-$\ell_p$ norm. When $B$ is the
identity matrix and $p=2$, the latter case corresponds to
the well-known Group Lasso method \cite{yuan}, for which well studied optimization
techniques are available. Other choices of the matrix $B$ give rise to
different kinds of Group Lasso with overlapping groups
\cite{Jenatton,binyu}, which have proved to be effective in modeling structured sparse regression problems. 
Further examples can be obtained considering composition with the
$\ell_1$ norm (e.g. this includes the Fused Lasso penalty function
\cite{tib05} and other total variation methods \cite{MSX}) as well as composition
with orthogonally invariant norms, which are relevant, for example, in
the context of multi-task learning \cite{argyriou2010spectral}.

A common approach to solve many optimization problems of the 
general form \eqref{eq:opt}
is via proximal methods. These are first-order iterative methods, 
whose computational cost per iteration is comparable to gradient descent.
In some problems in which $g$ has a simple enough form, they can be combined 
with acceleration techniques
\cite{beck09,Nesterov83,Nesterov07,tseng08,tseng10}, to 
yield significant gains in the
number of iterations required to reach a certain approximation 
accuracy of the minimal value. 
The essential step of proximal methods requires the computation of the proximity
operator of function $g$ (see Definition
\ref{def:prox} below). In certain cases of practical importance,
this operator admits a closed form, which makes proximal methods
appealing to use. However, in the general case \eqref{eq:comp-reg}
the proximity operator may not be easily computable.
We are aware of techniques to compute this operator for only some
specific choices of the function $\omega$ and the matrix $B$. Most
related to our work are recent papers for Group Lasso with overlap
\cite{liu2010fast} and Fused Lasso \cite{liu2010}. See also 
\cite{AEP,beck09,kim2010scalable,Obo10,mosci10} for other optimization methods 
for structured sparsity.

The main contribution of this paper is a general technique to compute
the proximity operator of the composite regularizer
\eqref{eq:comp-reg} from the solution of a certain fixed point
problem, which depends on the proximity operator of the function
$\omega$ and the matrix $B$. This fixed point problem can be solved by
a simple and efficient iterative scheme when the proximity operator of
$\omega$ has a closed form or can be computed in a finite number of
steps.  When $f$ is a strongly smooth function, the above result can
be used together with Nesterov's accelerated method
\cite{Nesterov83,Nesterov07} to provide an efficient first-order method for solving
the optimization problem \eqref{eq:opt}.  Thus, our technique allows
for the application of proximal methods on a much wider class of
optimization problems than is currently possible. Our technique is
both more general than current approaches and also, as we argue with
numerical simulations, computationally efficient.  In
particular, we will demonstrate that our method outperforms state of
the art $O(\frac{1}{T})$ methods for overlapping Group Lasso and matches
optimal $O(\frac{1}{T^2})$ methods for the Fused Lasso and tree structured
Group Lasso.
%

The paper is organized as follows. In Section \ref{sec:2}, we review
the notion of proximity operator and useful facts from fixed point
theory. In Section \ref{sec:3}, we discuss some examples of composite
functions of the form \eqref{eq:comp-reg} which are valuable in
applications. In Section
\ref{sec:4}, we present our technique to compute the proximity operator for 
a composite regularizer of the form \eqref{eq:comp-reg} and then an
algorithm to solve the associated optimization
problem~\eqref{eq:opt}. In Section \ref{sec:5}, we report our
numerical experience with this method.


\section{Background}
\label{sec:2}


We denote by $\la\cdot,\cdot\ra$ the Euclidean inner product on $\R^d$
and let $\|\cdot\|_2$ be the induced norm. If $v:\R \rightarrow \R$, for
every $x \in \R^d$ we denote by $v(x)$ the vector $(v(x_i): i \in
\N_d)$, where, for every integer $d$, we use $\N_d$ as a shorthand 
for the set $\{1,\dots,d\}$. For every $p \geq 1$, we define the $\ell_p$ norm of $x$ as $\|x\|_p = 
(\sum_{i\in
\N_d}|x_i|^p)^\frac{1}{p}$.

The proximity operator on a Hilbert space was introduced by Moreau in
\cite{moreau62,moreau65}. 

\begin{definition}
Let $\reg$ be a real valued convex function on $\rd$. The proximity operator of $\reg$ is defined, for every $x\in\rd$ by
\beq
\prox_\reg (x) := \argmin\limits_{y \in \rd} 
\left\{ \dfrac{1}{2} \|y-x\|_2^2 + \reg(y) \right\} \,.
\label{eq:prox}
\eeq
\label{def:prox}
\end{definition}
\vspace{-.3truecm}
The proximity operator is well defined, because the
above minimum exists and is unique. 

Recall that the subdifferential of a convex function $\reg$ at $x$ is defined as
$$
\partial \reg(x) = \{u: u \in \R^d, \la y-x,u\ra +\reg(x) \leq \reg(y),~ y \in \R^d\}.
$$ The subdifferential is a nonempty compact and convex set. Moreover,
if $\reg$ is differentiable at $x$ then its subdifferential at $x$
consists only of the gradient of $\reg$ at $x$. 
The next proposition establishes a relationship between the proximity
operator and the subdifferential of $\reg$ -- see, for example,
\cite[Prop.~2.6]{MSX} for a proof.
\begin{proposition}
If $\reg$ is a convex function on $\rd$ and $y\in\rd$ then
\beq
x\in\partial\reg(y) ~\quad\text{if and only if}\quad~ y=\prox_\reg(x+y) \,.
\label{eq:grad_prox}
\eeq
\label{prop:grad_prox}
\vspace{-.3truecm}
\end{proposition}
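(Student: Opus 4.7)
The plan is to derive the equivalence directly from the first-order optimality condition for the strongly convex minimization problem that defines $\prox_\reg$. Since $\frac{1}{2}\|y-x\|_2^2 + \reg(y)$ is a sum of a convex differentiable function and a convex function, the subdifferential sum rule gives an exact expression for its subdifferential, so the minimizer is characterized by a containment involving $\partial\reg$.

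Concretely, I would first fix $z\in\R^d$ and let $u^\star = \prox_\reg(z)$. Because $u \mapsto \frac{1}{2}\|u-z\|_2^2 + \reg(u)$ is strongly convex, $u^\star$ is the unique minimizer, and by Fermat's rule (together with the subdifferential sum rule, using that the quadratic term is everywhere differentiable) this is equivalent to
\begin{equation}
0 \in (u^\star - z) + \partial\reg(u^\star),
\end{equation}
that is, $z - u^\star \in \partial\reg(u^\star)$.

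Second, I would specialize to the case $z = x+y$ and ask when $u^\star = y$. The characterization above becomes $(x+y) - y \in \partial\reg(y)$, i.e.\ $x \in \partial\reg(y)$. By uniqueness of the minimizer, $y = \prox_\reg(x+y)$ holds if and only if $y$ satisfies this optimality condition, which yields both directions of the claimed equivalence in \eqref{eq:grad_prox}.

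There is no real obstacle here: the only point requiring a line of justification is the use of the subdifferential sum rule, which applies without a qualification condition because one of the summands ($\frac{1}{2}\|\cdot - z\|_2^2$) is finite and differentiable on all of $\R^d$. Everything else is a direct substitution, so the proof is essentially a two-line argument once the optimality condition is written down.
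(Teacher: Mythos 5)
Your argument is correct: the paper itself does not prove this proposition but defers to a citation (\cite[Prop.~2.6]{MSX}), and your derivation via Fermat's rule and the subdifferential sum rule applied to the strongly convex objective defining $\prox_\reg$ is exactly the standard argument that reference would supply. The specialization $z=x+y$ together with uniqueness of the minimizer cleanly yields both directions, so nothing is missing.
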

We proceed to discuss some examples of functions $\reg$ and the
corresponding proximity operators. 

If $\reg(x) = \lambda \|x\|_p^p$, where $\lam$ is a positive
parameter, we have that
\beq
\prox_{\reg}(x) = h^{-1}(|x|) {\rm sign}(x)
\label{eq:proxLp}
\eeq
where the function $h: [0,\infty) \rightarrow [0,\infty)$ is defined, for every
$t \geq 0$, as $h(t) = \lambda \, p \, t^{p-1} + t$. 
This fact follows immediately from the optimality condition
of the optimization problem \eqref{eq:prox}. Using the above
equation, we may also compute the proximity map of a multiple of the
$\ell_p$ norm, namely the case that $\omega = \gamma \|\cdot\|_p$, where $\gamma > 0$. Indeed, for every $x \in\R^d$, there exists a value of
$\lam$, depending only on $\gamma$ and $x$, such that the optimization
problem \eqref{eq:prox} for $\omega = \gamma \|\cdot\|_p$ equals to
the solution of the same problem for $\omega = \lam
\|\cdot\|_p^p$. Hence the proximity map of the $\ell_p$ norm can be
computed by \eqref{eq:proxLp} together with a simple line search. The
cases that $p\in \{1,2\}$ are simpler, see e.g. \cite{combettes}. For $p=1$ we obtain the
well-known soft-thresholding operator, namely
\beq
\prox_{\lam \|\cdot\|_1} = (|x|-\lam)_+ \sign(x),
\label{eq:proxl1}
\eeq 
where, for every $t \in \R$, we define $(t)_+=t$ if $t \geq 0$ and
zero otherwise; when $p=2$ we have that
\begin{equation}
\prox_{\lam\|\cdot\|_2}(x) = 
\left\{
\begin{array}{ll}
(\|x\|_2-\lam)_+ \frac{x}{\|x\|_2} & \text{if~} x \neq 0 \\
0 & \text{if~}x=0.
\end{array} \right.
\label{eq:proxL2}
\end{equation}
In our last example, we consider the $\ell_\infty$ norm, which is defined,
for every $x \in \R^d$ as $\|x\|_\infty = \max\{|x_i|: i \in
\N_d\}$. We have that $$
\prox_{\lam \|\cdot\|_\infty}(x) = \min\left\{|x|,
\frac{1}{k} \sum_{|x_i| > s_k} |x_i|-\lam
\right\}
\sign(x)
$$
where $s_k$ is the $k$-th largest value of the components of the vector $|x|$ and 
$k$ is the largest integer such that $\sum_{|x_i| > s_k} (|x|_i-s_k) < \lam$. For a proof of the above 
formula, see, for example \cite[Sec.~5.4]{duchi}. 

Finally, we recall some basic facts about fixed point theory which are
useful for our study. For more information on the material presented
here, we refer the reader to \cite{zalinescu}. 


A mapping $\vp: \R^d \rightarrow \R^d$ is called strictly
non-expansive (or contractive) if there exists $\beta \in [0,1)$ such
that, for every $x,y\in \R^d$, $\|\vp(x)-\vp(y)\|_2\leq \beta \|x-y\|_2$. 
If the above inequality holds for $\beta =1$, the mapping is called
nonexpansive. 
As noted in \cite[Lemma 2.4]{combettes}, 
both $\prox_{\reg}$ and $I- \prox_{\reg}$ are nonexpansive. 

We say that $x$ is a {\em fixed point} of a mapping $\vp$
if $x=\vp(x)$. The Picard iterates $x^n, n \in \N$, starting at $x_0
\in \R^d$ are defined by the recursive equation $x^{n} = \vp(x^{n-1})$.
It is a well-known fact that, if $\vp$ is strictly nonexpansive then
$\vp$ has a unique fixed point $x$ and $\lim_{n\rightarrow \infty} x^n
= x$. However, this result fails if $\vp$ is nonexpansive. We end this
section by stating the main tool which we use to find a fixed point of
a nonexpansive mapping $\vp$.
\begin{theorem} (Opial $\ka$-average theorem \cite{opial})
Let $\vp: \R^d \rightarrow \R^d$ be a nonexpansive mapping, which has
at least one fixed point and let $\vp_\ka := \ka I + (1-\ka) \vp$.
Then, for every $\ka \in (0,1)$, the Picard iterates of $\vp_\ka$
converge to a fixed point of $\vp$.
\label{thm:opial}
\end{theorem}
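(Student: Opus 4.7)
The plan is to prove the theorem in the classical averaged-operator style, with four main ingredients: coincidence of fixed points, Fejér monotonicity, asymptotic regularity, and extraction of a convergent subsequence (using finite dimensionality).

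First, I would observe that $\varphi$ and $\varphi_\ka$ have the same fixed points, since $x=\ka x+(1-\ka)\varphi(x)$ is equivalent to $(1-\ka)(x-\varphi(x))=0$ and $\ka<1$. In particular, the hypothesis guarantees a fixed point $x^*$ of $\varphi_\ka$. The next step is to show Fejér monotonicity with respect to the fixed point set: for any fixed point $x^*$,
\[
\|x^{n+1}-x^*\|_2 \le \ka\|x^n-x^*\|_2+(1-\ka)\|\varphi(x^n)-\varphi(x^*)\|_2 \le \|x^n-x^*\|_2,
\]
using nonexpansiveness of $\varphi$. This immediately gives boundedness of $\{x^n\}$ and convergence of $\{\|x^n-x^*\|_2\}$.

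The core of the argument is asymptotic regularity, that is, $\|x^n-\varphi(x^n)\|_2\to 0$. Here I would apply the convex-combination identity
\[
\|\ka a+(1-\ka)b\|_2^2 = \ka\|a\|_2^2+(1-\ka)\|b\|_2^2-\ka(1-\ka)\|a-b\|_2^2
\]
with $a=x^n-x^*$ and $b=\varphi(x^n)-x^*$. Combined with $\|\varphi(x^n)-x^*\|_2\le\|x^n-x^*\|_2$ from nonexpansiveness, this yields
\[
\|x^{n+1}-x^*\|_2^2 \le \|x^n-x^*\|_2^2-\ka(1-\ka)\|x^n-\varphi(x^n)\|_2^2.
\]
Summing telescopically over $n$ gives $\sum_{n\ge 0}\|x^n-\varphi(x^n)\|_2^2 \le \|x^0-x^*\|_2^2/(\ka(1-\ka))<\infty$, forcing $\|x^n-\varphi(x^n)\|_2\to 0$. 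Here the assumption $\ka\in(0,1)$ is used in an essential way, since both $\ka$ and $1-\ka$ must be strictly positive for the telescoping bound to be informative.

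To finish, I would use finite dimensionality: boundedness of $\{x^n\}$ in $\R^d$ yields a convergent subsequence $x^{n_k}\to\bar x$, and continuity of the (nonexpansive) map $\varphi$ together with asymptotic regularity gives $\bar x=\varphi(\bar x)$, so $\bar x$ is itself a fixed point of $\varphi$. Applying the Fejér monotonicity from the second step with $x^*=\bar x$ shows that $\|x^n-\bar x\|_2$ is nonincreasing, and since it has a subsequence tending to zero, the whole sequence converges to $\bar x$. The main obstacle I anticipate is simply setting up the asymptotic regularity bound correctly; once that telescoping inequality is in hand, the rest of the argument is a standard combination of Fejér monotonicity with a subsequence limit.
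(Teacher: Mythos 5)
Your proposal is correct, and its overall skeleton matches the paper's: coincidence of fixed points, Fej\'er monotonicity with respect to a fixed point, asymptotic regularity via the strong-convexity identity for convex combinations (the paper's Lemma \ref{lem:A}), and finally the subsequence-plus-monotonicity argument of Proposition \ref{prop:1}. The one place you genuinely diverge is the asymptotic regularity step. The paper sets $\bar d = \lim_n \|x^n - u\|$, treats $\bar d = 0$ separately, normalizes $w^n = \bar d^{-1}(x^n-u)$, $u^n = \bar d^{-1}(\vp(x^n)-u)$, and invokes a limiting lemma (Lemma \ref{lem:B}) to conclude $x^n - \vp(x^n) \to 0$. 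You instead apply the identity directly to $a = x^n - x^*$, $b = \vp(x^n)-x^*$ to get
\begin{equation}
\|x^{n+1}-x^*\|_2^2 \le \|x^n-x^*\|_2^2 - \ka(1-\ka)\|x^n-\vp(x^n)\|_2^2
\end{equation}
and telescope. This is cleaner: it avoids the case split on $\bar d$ and the auxiliary limit lemma, and it yields the strictly stronger, quantitative conclusion $\sum_n \|x^n - \vp(x^n)\|_2^2 \le \|x^0-x^*\|_2^2/(\ka(1-\ka))$ rather than mere convergence to zero. The paper's normalized-sequence formulation buys nothing extra here; your route is the more standard modern treatment of averaged operators and is preferable for its summability estimate.
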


\section{Examples of Composite Functions}
\label{sec:3}
In this section, we show that several examples of penalty functions which have appeared in the literature 
fall within the class of linear composite functions \eqref{eq:comp-reg}. 

We define for every $d \in \N$, $x \in \R^d$ and $J \subseteq \N_d$, the 
restriction of the vector $x$ to the index set $J$ as $x_{|J} = (x_i: i \in J)$. 
Our first example considers the Group Lasso penalty function, 
which is defined as
\beq
\reg_{\rm GL}(x) = \sum_{\ell \in \N_k} \|x_{|J_\ell}\|_2
\label{eq:GL}
\eeq 
where $J_\ell$ are prescribed subsets of $\N_d$ (also
called the ``groups'') such that $\cup_{\ell=1}^k J_\ell = \N_d$. The
standard Group Lasso penalty (see e.g. \cite{yuan}) corresponds to the
case that the collection of groups $\{J_\ell : \ell \in \N_k\}$ forms
a partition of the index set $\N_d$, that is, the groups do not
overlap. In this case, the optimization problem \eqref{eq:prox} for
$\omega=\omega_{\rm GL}$ decomposes as the sum of separate problems
and the proximity operator is readily obtained by applying the formula
\eqref{eq:proxL2} to each group separately.  In many cases of
interest, however, the groups overlap and the proximity operator
cannot be easily computed.

Note that the function \eqref{eq:GL} is of the form \eqref{eq:comp-reg}. 
We let $d_\ell = |J_\ell|$, $m =\sum_{\ell \in \N_k} d_\ell$ and define, for 
every $z \in \R^m$, $\reg(z) = \sum_{\ell \in \N_k} \|z_{\ell}\|_2$, 
where, for every $\ell \in \N_k$ we let $z_\ell = (z_i: \sum_{j\in \N_{\ell-1}} 
d_j< i \leq \sum_{j\in \N_\ell} d_j)$.
Moreover, we choose $B = [B_1\trans,\dots,B_k\trans]\trans$, 
where $B_\ell$ is a $d_\ell \times d$ matrix defined as
\begin{equation*}
(B_\ell)_{ij} = \left\{
\begin{array}{rl}
1 & \text{if~} j = J_\ell[i] \\
0 & \text{otherwise}
\end{array} \right.
\end{equation*}
where for every $J \subseteq \N_d$ and $i \in \N_{|J|}$, we denote by $J[i]$ 
the $i$-th largest integer in $J$. 

The second example concerns the Fused Lasso \cite{tib05}, which considers 
the penalty function $x \mapsto g(x)= \sum_{i\in \N_{d-1}} |x_{i}-x_{i+1}|$. 
It immediately follows that this function falls into the class \eqref{eq:comp-reg} 
if we choose $\reg$ to be the $\ell_1$ norm and $B$ the first order divided difference
matrix
\begin{equation}
B= \left[
\begin{array}{rrrrr}
1 & -1 & 0 & \ldots & \ldots\\
0 & 1 & - 1 & 0 & \ldots \\
\vdots & \ddots & \ddots & \ddots & \ddots\\
\end{array} \right].
\label{eq:fused}
\end{equation}
The intuition behind the Fused Lasso is that it favors vectors which
do not vary much across contiguous components.  Further extensions of
this case may be obtained by choosing $B$ to be the incidence matrix
of a graph, a setting which is relevant for example in online learning
over graphs \cite{mark09}. Other related examples include the
anisotropic total variation, see for example, \cite{MSX}.

The next example considers composition with orthogonally invariant (OI) 
norms. Specifically, we choose a symmetric gauge function $h$,
that is, a norm $h$, which is both {\em absolute} and {\em invariant under
permutations} \cite{von-neumann} and define the function
$\reg:\R^{d\times n} \rightarrow [0,\infty)$, at $X$ by the formula
$$
\reg(X) = h(\sigma(X))
$$ 
where $\sigma(X) \in [0,\infty)^{r}$, $r = \min(d,n)$ is the vector
formed by the singular values of matrix $X$, in non-increasing
order. An example of OI-norm are Schatten
$p$-norms, which correspond to the case that $\reg$ is
the $\ell_p$-norm. The next proposition provides a formula for the proximity operator of 
an OI-norm. The proof is based on an inequality by von Neumann
\cite{von-neumann}, sometimes called von Neumann's trace
theorem or Ky Fan's inequality.

\begin{proposition}
With the above notation, it holds that
$$
\prox_{h \circ \sigma}(X) = U {\rm diag}\left(\prox_h(\sigma(X))\right) V\trans
$$ 
where $X=U{\rm diag}(\sigma(X)) V\trans$ and $U$ and $V$ are the matrices 
formed by the left and right singular vectors of $X$, respectively. 
\label{prop:prox-OI}
\end{proposition}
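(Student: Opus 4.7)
The plan is to reduce the matrix proximity problem to a vector proximity problem by aligning singular vectors, using von Neumann's trace inequality as the decoupling tool. Concretely, I want to minimize
\begin{equation}
F(Y) := \tfrac{1}{2}\|Y-X\|_F^2 + h(\sigma(Y))
\end{equation}
over $Y \in \R^{d\times n}$. Expanding the Frobenius norm gives $F(Y) = \tfrac{1}{2}\|X\|_F^2 + \tfrac{1}{2}\|\sigma(Y)\|_2^2 - \langle Y,X\rangle + h(\sigma(Y))$, where I used $\|Y\|_F^2 = \|\sigma(Y)\|_2^2$.

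Next I would invoke von Neumann's trace inequality: $\langle Y,X\rangle \leq \langle \sigma(Y),\sigma(X)\rangle$, with equality precisely when $Y$ admits a simultaneous singular value decomposition with $X$, i.e.\ $Y = U\,\mathrm{diag}(\sigma(Y))\,V\trans$ with the same orthogonal factors $U,V$ that diagonalize $X$. Substituting this bound into $F(Y)$ yields
\begin{equation}
F(Y) \;\geq\; \tfrac{1}{2}\|X\|_F^2 + \tfrac{1}{2}\|\sigma(Y)-\sigma(X)\|_2^2 + h(\sigma(Y)),
\end{equation}
with equality iff $Y$ shares singular vectors with $X$. Thus, provided the unconstrained minimizer of the right-hand side (over vectors $s\in \R^r$) happens to be non-negative and non-increasing, that minimizer is exactly $\prox_h(\sigma(X))$ and the corresponding $Y = U\,\mathrm{diag}(\prox_h(\sigma(X)))\,V\trans$ attains the lower bound, proving the formula.

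The main obstacle, and the only delicate point, is justifying the last ``provided'' clause: I must show that $u^\star := \prox_h(\sigma(X))$ is non-negative and sorted non-increasingly, so that it is a legitimate vector of singular values realizable by $U\,\mathrm{diag}(u^\star)\,V\trans$. This is where I exploit both properties of a symmetric gauge function. \emph{Absoluteness} of $h$ (i.e.\ $h(u) = h(|u|)$) implies that when the input $\sigma(X)\geq 0$, replacing any candidate $u$ by $|u|$ strictly decreases $\tfrac{1}{2}\|\sigma(X)-u\|_2^2$ unless $u\geq 0$, hence $u^\star\geq 0$. \emph{Permutation invariance} of $h$ implies that if the input is non-increasing and some coordinates of $u$ violate this ordering, swapping them preserves $h(u)$ but strictly decreases the quadratic term (a standard rearrangement argument), forcing $u^\star$ to be non-increasing. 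With these two monotonicity/positivity properties of $\prox_h$ in hand, the reduction above closes and gives exactly the claimed formula. Uniqueness of the minimizer of $F$ (it is strongly convex in $Y$) ensures we have identified $\prox_{h\circ\sigma}(X)$.
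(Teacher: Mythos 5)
Your proposal is correct and follows essentially the same route as the paper: von Neumann's trace inequality reduces the matrix problem to the vector problem $\min_s \tfrac{1}{2}\|\sigma(X)-s\|_2^2 + h(s)$, and a swap/rearrangement argument shows $\prox_h(\sigma(X))$ inherits the non-increasing ordering of $\sigma(X)$. The one genuine addition is your use of absoluteness of $h$ to verify that $\prox_h(\sigma(X))$ is also non-negative (so it is a legitimate singular-value vector), a point the paper's proof leaves implicit.
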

\begin{proof}
The proof is based on an inequality by von Neumann
\cite{von-neumann}, sometimes called von Neumann's trace
theorem or Ky Fan's inequality. It states that $\la X,Y \ra \leq \la
\sigma(X),\sigma(Y)\ra$, with equality if and only if $X$ and $Y$ share
the same ordered system of singular vectors. Note that
\begin{eqnarray}
\|X-Y\|^2_2  & = &  \|X\|_2^2 + \|Y\|_2^2 - 
2 \la X,Y\ra\\ \nonumber 
~ & \geq &  \|\sigma(X)\|_2^2 + \|\sigma(Y)\|_2^2 - 2 \la \sigma(X),\sigma(Y)\ra \\ \nonumber
~ &=& \|\sigma(X) - \sigma(Y)\|^2_2
\end{eqnarray}
and the equality holds if and only if
$Y=U {\rm diag}( \sigma(Y) ) V\trans$. 
Consequently, we have that
\begin{eqnarray}
\frac{1}{2}\|X-Y\|^2_2  + \reg(Y) & \geq & \frac{1}{2}
\|\sigma(X) -\prox_h(\sigma(X))\|_2^2 \\ \nonumber
& ~ & + h(\prox_h(\sigma(X))) \,.
\end{eqnarray}
To conclude the proof we need to show that
$\gamma := \prox_{h}(\sigma(X))$ has the same ordering of $\sigma$,
that is, $\gamma$ is non-increasing. Suppose on the contrary that there exists
$i,j \in
\N_d$, $i < j$, such that $\gamma_i < \gamma_j$. Let $\tgamma$ be the
vector obtained by flipping the $i$-th and $j$-th components of
$\gamma$. A direct computation gives 
$$
\frac{1}{2}\|\sigma-\gamma\|^2_2+ h(\gamma) - 
\frac{1}{2}\|\sigma-\tgamma\|^2_2 - h(\tgamma) = (\sigma_i-\sigma_j)(\gamma_i-\gamma_j).
$$
Since the left hand side of the above equation is positive, this leads to a contradiction.
\end{proof}

We can compose an OI-norm with a linear transformation $B$, this time
between two spaces of matrices, obtaining yet another subclass of
penalty functions of the form
\eqref{eq:comp-reg}.  This setting is relevant in the context of
multi-task learning. For example  
\cite{Evg06} chooses $h$ to be
the {\em trace} or {\em nuclear} norm and considers a specific linear
transformation which model task relatedness, namely,
that $g(X) = \left\|\sigma\left(X(I-\frac{1}{n}ee\trans)\right)\right\|_1$, 
where $e \in \R^d$ is the vector all of whose components are equal to one.



\section{Fixed Point Algorithms Based on Proximity Operators}
\label{sec:4}
We now propose optimization approaches which use fixed
point algorithms for nonsmooth problems. 
We shall focus on problem \eqref{eq:opt}
under the assumption \eqref{eq:comp-reg}.
We assume that $f$ is a {\em strongly smooth}
convex function, that is, $\nabla f$ is Lipschitz continuous
with constant $L$,
and $\omega$ is a {\em nondifferentiable} convex function.
A typical class of such problems occurs in regularization
methods where $f$ corresponds to a data error term with, say, 
the square loss. 
Our approach builds on proximal methods and uses fixed point 
(also known as Picard) iterations
for numerically computing the proximity operator. 


\subsection{Computation of a Generalized Proximity Operator with a Fixed Point Method}
\label{sec:quad}

As the basic building block of our methods, we consider the optimization problem \eqref{eq:opt}
in the special case when $f$ is a quadratic function, that is,
\beq
\min\left\{ \dfrac{1}{2}y\trans Q y - x\trans y + \reg(By) : y \in\rd \right\} \,.
\label{eq:quad}
\eeq
where $x$ is a given vector in $\rd$ and $Q$ a positive definite $d
\times d$ matrix.

Recall the {\em proximity operator} in Definition
\ref{def:prox}.  Under the assumption that we can explicitly or in a
finite number of steps compute the proximity operator of $\reg$, our
aim is to develop an algorithm for evaluating a minimizer of problem
\eqref{eq:quad}. We describe the algorithm for a generic Hessian $Q$,
as it can be applied in various contexts. For example, it could
lead to a second-order method for solving \eqref{eq:opt}, which
will be the topic of future work.  
In this paper, we will apply the technique to the task of evaluating $\prox_{\reg\circ B}$.

First, we observe that the minimizer of \eqref{eq:quad} exists and is {\em unique}. 
Let us call this minimizer $\uh$.
Similar to Proposition \ref{prop:grad_prox}, we have the following proposition.
\begin{proposition}
If $\reg$ is a convex function on $\R^m$, $Q$ a $d \times d$ positive
definite matrix and $x\in\rd$ then $\uh$ is the solution of problem 
\eqref{eq:quad} {\em if and only if}
\beq
Q\uh \in x-\partial (\reg\circ B)(\uh).
\label{eq:uhat}
\eeq
\label{prop:grad_quad}
\end{proposition}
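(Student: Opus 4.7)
The plan is to derive the stated inclusion as the first-order optimality condition for the convex optimization problem \eqref{eq:quad}. Let me write the objective as $\Phi(y) := \frac{1}{2}y\trans Q y - x\trans y + (\reg\circ B)(y)$. The first summand $\Psi(y) := \frac{1}{2}y\trans Q y - x\trans y$ is smooth and strictly convex (since $Q$ is positive definite), while $\reg\circ B$ is convex as the composition of a convex function with a linear map. Hence $\Phi$ is strictly convex, which, combined with coercivity coming from the quadratic term, already gives existence and uniqueness of the minimizer $\uh$ (as noted just before the proposition).

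The key standard fact I would invoke is that for a proper convex function on $\R^d$, $\uh$ minimizes $\Phi$ if and only if $0 \in \partial \Phi(\uh)$. The second ingredient is the Moreau--Rockafellar sum rule: whenever one summand is finite and continuous on all of $\R^d$ (as is the case for the smooth quadratic $\Psi$), one has
\beq
\partial \Phi(\uh) = \nabla \Psi(\uh) + \partial(\reg\circ B)(\uh) = \{Q\uh - x\} + \partial(\reg\circ B)(\uh).
\eeq
Therefore $0 \in \partial\Phi(\uh)$ is equivalent to the existence of some $v \in \partial(\reg\circ B)(\uh)$ with $Q\uh - x + v = 0$, which rearranges exactly to $Q\uh \in x - \partial(\reg\circ B)(\uh)$. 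The reverse direction uses the same chain of equivalences backwards, all of which are if-and-only-if statements under the stated hypotheses.

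The only delicate point is justifying the sum rule without any additional qualification condition on $\reg\circ B$. This is why I would emphasize the continuity (indeed differentiability on all of $\R^d$) of the quadratic part: the standard form of the Moreau--Rockafellar theorem requires only that the relative interiors of the effective domains of the two summands intersect, and since $\Psi$ has effective domain all of $\R^d$, this is automatic regardless of the structure of $\reg$ or of $B$. No assumption on $B$ (e.g.\ surjectivity) is needed for this step, which is the main conceptual hurdle one might otherwise worry about. After this is in place, the rest is a one-line rewriting of the optimality condition.
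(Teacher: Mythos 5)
Your proof is correct and is exactly the standard argument the paper leaves implicit: the authors state this proposition without proof, remarking only that it is ``similar to Proposition \ref{prop:grad_prox}'', and Fermat's rule $0\in\partial\Phi(\uh)$ combined with the subdifferential sum rule for a smooth-plus-convex sum is precisely the intended justification. Your care about the qualification condition is sound but even easier than you suggest here: since $\reg$ is assumed real-valued on all of $\R^m$, the composite $\reg\circ B$ is itself finite and continuous on all of $\rd$, so the sum rule applies with no further discussion.
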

\vspace{-.3truecm}
The subdifferential $\partial (\reg\circ B)$ appearing in the inclusion \eqref{eq:uhat} 
can be expressed with the chain rule
(see, e.g. \cite{borwein}), which gives the formula
\beq
\partial (\reg\circ B) = B\trans \circ (\partial\reg) \circ B \,.
\label{eq:chain}
\eeq
Combining equations \eqref{eq:uhat} and \eqref{eq:chain} yields the fact that
\beq
Q\uh \in x - B\trans \partial \reg( B\uh) \,.
\label{eq:grad}
\eeq
This inclusion along with Proposition \ref{prop:grad_prox} allows us to express
$\uh$ in terms of the proximity operator of $\reg$. To formulate our observation
we introduce the affine transformation $A:\R^m \to \R^m$ defined, for fixed $x\in\rd$, $\lambda>0$,
at $z\in\R^m$ by
\beq
Az := (I-\lambda BQ^{-1}B\trans) z + BQ^{-1}x
\label{eq:A}
\eeq
and the operator $H:\R^m\to \R^m$ 
\beq
H := \left(I-\prox_{\frac{\reg}{\lam}} \right)\circ A\,.
\label{eq:H}
\eeq

\begin{theorem}
If $\reg$ is a convex function on $\R^m$, $B\in\R^{m \times d}$, $x\in\rd$, $\lambda$
is a positive number and $\uh$ is the minimizer of \eqref{eq:quad} then
\beq
\uh = Q^{-1}(x - \lambda B\trans v)
\label{eq:fixed}
\eeq
if and only if $v\in\R^m$ is a fixed point of $H$.
\label{thm:fixed}
\end{theorem}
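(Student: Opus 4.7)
The plan is to show that the fixed point equation $v = Hv$ is an algebraic reparametrization of the optimality condition furnished by Proposition \ref{prop:grad_quad}, with the $\lambda$ playing the role of a rescaling that lets us invoke Proposition \ref{prop:grad_prox}.

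First I would start from Proposition \ref{prop:grad_quad}, which says $\uh$ solves \eqref{eq:quad} if and only if $Q\uh \in x - \partial(\reg\circ B)(\uh)$. Using the chain rule \eqref{eq:chain}, I rewrite this as the existence of some $u \in \partial \reg(B\uh)$ with $Q\uh = x - B\trans u$. The key move is then to reparametrize $u = \lambda v$, which converts the condition into the pair
\begin{equation}
\uh = Q^{-1}(x - \lambda B\trans v) \quad \text{and} \quad v \in \partial\!\left(\tfrac{\reg}{\lam}\right)\!(B\uh).
\end{equation}
This already produces the target formula \eqref{eq:fixed}; what remains is to show that the second condition is equivalent to $v$ being a fixed point of $H$.

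Next I would apply Proposition \ref{prop:grad_prox} to the inclusion $v \in \partial(\reg/\lam)(B\uh)$, turning it into the equivalent statement $B\uh = \prox_{\reg/\lam}(v + B\uh)$. The main computation is then to identify the argument of the proximity operator as $Av$. Substituting \eqref{eq:fixed} gives $B\uh = BQ^{-1}x - \lambda BQ^{-1}B\trans v$, so
\begin{equation}
v + B\uh = (I - \lambda BQ^{-1}B\trans) v + BQ^{-1}x = Av,
\end{equation}
by definition \eqref{eq:A}. In particular, rearranging yields the useful identity $B\uh = Av - v$.

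Finally, I would plug this back in: the condition $B\uh = \prox_{\reg/\lam}(v + B\uh)$ becomes $Av - v = \prox_{\reg/\lam}(Av)$, which is exactly
\begin{equation}
v = (I - \prox_{\reg/\lam})(Av) = Hv,
\end{equation}
by definition \eqref{eq:H}. Since every step above is an \emph{equivalence}, both directions of the claimed biconditional follow at once: given a minimizer $\uh$, define $v$ through \eqref{eq:fixed} (equivalently, $v = Av - B\uh$) and verify $v = Hv$; conversely, given a fixed point $v$ of $H$, define $\uh$ by \eqref{eq:fixed} and retrace the chain of equivalences back to Proposition \ref{prop:grad_quad}. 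There is no real obstacle here --- the only thing to be a bit careful about is the rescaling by $\lambda$, since without it one would not obtain the operator $\prox_{\reg/\lam}$ on the right-hand side and the affine map $A$ would not take its stated form.
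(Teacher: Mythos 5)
Your proposal is correct and follows essentially the same route as the paper: combine Proposition \ref{prop:grad_quad} with the chain rule \eqref{eq:chain}, introduce $v$ via the rescaled subgradient $\lambda v\in\partial\reg(B\uh)$, convert the inclusion to $B\uh=\prox_{\reg/\lam}(Av)$ using Proposition \ref{prop:grad_prox}, and rearrange to $v=Hv$. The only cosmetic difference is that you present the argument as a single chain of equivalences, whereas the paper writes out the forward and converse directions separately.
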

\begin{proof}
From \eqref{eq:grad} we conclude that $\uh$ is characterized by the fact that
$\uh = Q^{-1}(x- \lambda B\trans v)$, 
where $v$ is a vector in the set $\partial \left(\frac{\reg}{\lambda}\right) (B\uh)$.
Thus it follows that $v \in \partial \left(\frac{\reg}{\lam}\right) \left(B Q^{-1} (x- \lambda B\trans v)\right)$. 
Using Proposition \ref{prop:grad_prox} we conclude that
\beq
B Q^{-1} (x- \lambda B\trans v) = 
\prox_{\frac{\reg}{\lam}}(Av).
\label{eq:fixed2}
\eeq
Adding and subtracting $v$ on the left hand side and rearranging the terms we see that $v$ is a fixed point of $H$.

Conversely, if $v$ is a fixed point of $H$, then equation \eqref{eq:fixed2} holds.
Using again Proposition \ref{prop:grad_prox} and the chain rule \eqref{eq:chain}, we conclude that
\beq
\lambda B\trans v \in\partial\left(\reg\circ B\right) ( Q^{-1} (x-\lambda B\trans v) )
\label{eq:fixed3}
\eeq
Proposition \ref{prop:grad_quad} together with the above inclusion now implies 
that $Q^{-1} (x- \lambda B\trans v)$ is the minimizer of \eqref{eq:quad}.
\end{proof}

Since the operator $(I-\prox_{\frac{\reg}{\lambda}})$ is nonexpansive
\cite[Lemma 2.1]{combettes}, then
\begin{eqnarray}
\|H(v) - H(w)\|_2 & \leq & \|Av - Aw\|_2 \\ \nonumber
~& \leq & \|I-\lam B Q^{-1} B\trans\|\, \|v-w\|_2. 
\end{eqnarray}
We conclude that the mapping $H$ is nonexpansive if the spectral
norm of the matrix $I-\lam BQ^{-1}B\trans$ is not greater than
one. Let us denote by $\lam_j,$ $j \in \N_m$, the eigenvalues of matrix
$BQ^{-1}B\trans$. We see that $H$ is nonexpansive provided that
$|1-\lam \lam_j| \leq 1$, that is if $0\leq \lam \leq 2/\lam_{\rm max}$, 
where $\lam_{\rm max}$ is the spectral norm of $BQ^{-1}B\trans$. 
In this case we can appeal to Opial's Theorem \ref{thm:opial} to find a fixed point of $H$. 

Note that if, for every $j \in \N_m$, $\lam_j > 0$, that is, the
matrix $B Q^{-1} B\trans$ is invertible, then the mapping $H$ is
strictly nonexpansive when $0<\lam < 2/\lam_{\rm max}$. In this case,
the Picard iterates of $H$ converge to the unique fixed point of $H$,
without the need to use Opial's Theorem.

We end this section by noting that, when $Q=I$, the above theorem
provides an algorithm for computing the proximity operator of $\reg \circ B$.
\begin{corollary}
Let $\reg$ be a convex function on $\R^m$, $B\in\R^{m \times d}$, $x\in
\R^d$, $\lambda$ a positive number and define the mapping $v \mapsto
(I - \prox_{\frac{\reg}{\lambda}}) ((I-\lam BB\trans)v + Bx)$. Then
\beq
\prox_{\reg\circ B}(x)= x - \lambda B\trans v
\label{eq:prox-fixed}
\eeq
{\em if and only if} $v$ is a fixed point of $H$.
\label{cor:fixed}
\end{corollary}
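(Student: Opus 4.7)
The plan is to obtain this corollary as a direct specialization of Theorem \ref{thm:fixed} to the case $Q = I$, with the only real work being to verify that the minimizer of the quadratic problem \eqref{eq:quad} then coincides with $\prox_{\reg\circ B}(x)$.

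First I would set $Q = I$ in \eqref{eq:quad} and rewrite the objective as
\begin{equation*}
\tfrac{1}{2} y\trans y - x\trans y + \reg(By) = \tfrac{1}{2}\|y - x\|_2^2 + \reg(By) - \tfrac{1}{2}\|x\|_2^2 \,.
\end{equation*}
Since the last term is independent of $y$, the unique minimizer $\uh$ of this problem is precisely $\prox_{\reg\circ B}(x)$ by Definition \ref{def:prox}. Next I would observe that with $Q = I$ the affine map \eqref{eq:A} reduces to $Az = (I - \lam BB\trans)z + Bx$, which is exactly the inner map appearing in the statement of the corollary, so the operator $H = (I - \prox_{\reg/\lam}) \circ A$ in the corollary matches the operator $H$ of Theorem \ref{thm:fixed}.

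Finally I would invoke Theorem \ref{thm:fixed} directly: it asserts that $\uh = Q^{-1}(x - \lam B\trans v)$ if and only if $v$ is a fixed point of $H$. With $Q = I$ and $\uh = \prox_{\reg\circ B}(x)$, this becomes exactly \eqref{eq:prox-fixed}, which is the desired equivalence. There is no real obstacle here; the only point requiring any care is the completion-of-the-square step identifying the quadratic minimization with the proximity operator, and keeping the roles of $\R^d$ and $\R^m$ straight so that $v$ lives in the correct space.
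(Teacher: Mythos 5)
Your proof is correct and follows exactly the route the paper intends: the paper derives Corollary \ref{cor:fixed} simply by remarking that Theorem \ref{thm:fixed} with $Q=I$ applies, and your completion-of-the-square step identifying the minimizer of \eqref{eq:quad} with $\prox_{\reg\circ B}(x)$ is the (unstated but necessary) justification for that specialization. Nothing is missing.
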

Thus, a fixed point iterative scheme like the above one can be used
as part of any proximal method when the regularizer has the form
\eqref{eq:comp-reg}.


\subsection{Accelerated First-Order Methods}
\label{sec:first}

Corollary \ref{cor:fixed} motivates a general proximal numerical approach to
solving problem \eqref{eq:opt} (Algorithm \ref{alg:prox}).
Recall that $L$ is the
Lipschitz constant of $\nabla f$.
The idea behind proximal methods -- see
\cite{combettes,fista,Nesterov07,tseng08,tseng10} and references
therein -- is to update the current
estimate of the solution $x_t$ using the proximity operator. 
This is equivalent to replacing $f$ with its linear
approximation around a point $\alpha_t$ specific to iteration $t$. 
The point $\alpha_t$ may depend on the
current and previous estimates of the solution $x_t, x_{t-1}, \dots$,
the simplest and most common update rule being $\alpha_{t}=x_{t}$.

\begin{algorithm}
\caption{Proximal \& fixed point algorithm.}
\begin{algorithmic}
\STATE $x_1,\alpha_1 \leftarrow 0$
\FOR {t=1,2,\dots}
\STATE Compute $x_{t+1} \leftarrow 
\prox_{\frac{\reg}{L}\circ B}\left(\alpha_t - \frac{1}{L}\nabla f(\alpha_t)\right)$
\\\qquad by the Picard-Opial process 
\STATE Update $\alpha_{t+1}$ as a function of $x_{t+1},x_t,\dots$
\ENDFOR 
\end{algorithmic}
\label{alg:prox}
\end{algorithm}
In particular, in this paper
we focus on combining Picard iterations with {\em accelerated
first-order methods} proposed by Nesterov
\cite{nesterov2005smooth,Nesterov07}. These methods use an $\alpha$
update of a specific type, which requires two levels of memory of $x$.
Such a scheme has the property of a quadratic decay in terms of the
iteration count, that is, the distance of the objective from the
minimal value is $O\left(\frac{1}{T^2}\right)$ after $T$ iterations.
This rate of convergence is optimal for a first order method in the 
sense of the algorithmic model of \cite{nemirovski}.

It is important to note that other methods may achieve faster rates,
at least under certain conditions. For example, {\em interior point methods}
\cite{ip} or {\em iterated reweighted least squares} \cite{daubechies10,osborne85,AEP}
have been applied successfully to nonsmooth convex problems.
However, the former require the Hessian and typically have high cost per iteration. 
The latter require solving linear systems at each iteration. 
Accelerated methods, on the other hand,
have a lower cost per iteration and scale to larger problem sizes.
Moreover, in applications where some type of thresholding operator
is involved -- for example, the Lasso \eqref{eq:proxl1} --
the zeros in the solution are exact, which may be desirable. 

Since their introduction, accelerated methods have quickly become
popular in various areas of applications, including machine learning, 
see, for example, 
\cite{mosci10,lin2010smoothing,liu2010fast,hierarchical} 
and references therein. However, their applicability has been
restricted by the fact that they require {\em exact} computation of
the proximity operator.  Only then is the quadratic convergence rate
known to hold, and thus methods using numerical computation of the
proximity operator are not guaranteed to exhibit this rate.  What we
show here, is how to further extend the scope of accelerated methods
and that, empirically at least, these new methods outperform current
$O\left(\frac{1}{T}\right)$ methods while matching the performance
of optimal $O(\frac{1}{T^2})$ methods. 

In Algorithm \ref{alg:acc} we describe a version of accelerated methods
influenced by \cite{tseng08,tseng10}. Nesterov's insight was that an appropriate
update of $\alpha_t$ which uses two levels of memory 
achieves the $O\left(\frac{1}{T^2}\right)$ rate.
Specifically, the optimal update is 
$\alpha_{t+1} \leftarrow x_{t+1} + \theta_{t+1}\left(\frac{1}{\theta_t}-1\right) (x_{t+1}-x_t)$
where the sequence $\theta_t$ is defined by $\theta_1 = 1$ 
and the recursive equation
\beq
\frac{1-\theta_{t+1}}{\theta_{t+1}^2} = \frac{1}{\theta_t^2} \,.
\label{eq:theta}
\eeq
We have adapted \cite[Algorithm 2]{tseng08} 
(equivalent to FISTA \cite{fista}) by computing the proximity operator of
$\frac{\reg}{L}\circ B$ using the Picard-Opial process described in
Section \ref{sec:quad}.
We rephrased the algorithm using the sequence 
$\rho_t := 1-\theta_t + \sqrt{1-\theta_t}
= 1-\theta_t +\frac{\theta_t}{\theta_{t-1}}$ for numerical 
stability.
At each iteration, the map $A_t$ is defined by 
$$
A_t z := \left(I-\frac{\lambda}{L} BB\trans\right) z -
\frac{1}{L}B(\nabla f(\alpha_t) - L\alpha_t)
$$ 
and $H_t$ as in \eqref{eq:H}.  
By Theorem \ref{thm:fixed}, the fixed
point process combined with the $x$ update are equivalent to $x_{t+1}
\leftarrow \prox_{\frac{\reg}{L}\circ B}\left(\alpha_t -
  \frac{1}{L}\nabla f(\alpha_t)\right)$.

\begin{algorithm}
\caption{Accelerated \& fixed point algorithm.}
\begin{algorithmic}
\STATE $x_1,\alpha_1 \leftarrow 0$
\FOR {t=1,2,\dots}
\STATE Compute a fixed point $v$ of $H_t$ by Picard-Opial 
\STATE $x_{t+1} \leftarrow \alpha_t - \frac{1}{L}\nabla f(\alpha_t)
-\frac{\lambda}{L} B\trans v$
\STATE $\alpha_{t+1} \leftarrow \rho_{t+1} x_{t+1} - (\rho_{t+1}-1)x_t$
\ENDFOR 
\end{algorithmic}
\label{alg:acc}
\end{algorithm}


\section{Numerical Simulations}
\label{sec:5}

We have evaluated the efficiency of our method with simulations on
different nonsmooth learning problems. One important aim of the
experiments is to demonstrate improvement over a 
state of the art suite of methods (SLEP) \cite{Liu:2009:SLEP:manual}
in the cases when the proximity operator is not exactly computable. 

An example of such cases which we considered in Section \ref{sec:5.1}
is the Group Lasso with {\em overlapping groups}. An algorithm for
computation of the proximity operator in a finite number of steps is
known only in the special case of hierarchy-induced groups
\cite{hierarchical}. In other cases such as groups induced by
directed acyclic graphs \cite{binyu} or more complicated sets of
groups, the best known theoretical rate for a first-order method is
$O\left(\frac{1}{T}\right)$.  We demonstrate that such a method can be
improved. 

Moreover, in Section \ref{sec:5.2} we report efficient convergence in
the case of a composite $\ell_1$ penalty
used for graph prediction \cite{mark09}.
In this case, matrix $B$ is the incidence matrix of a graph
and the penalty is $\sum\limits_{(i,j)\in E}\|x_i-x_j\|_1$,
where $E$ is the set of edges. Most work 
we are aware of for the composite $\ell_1$ penalty
applies to the special cases of total variation
\cite{beck09} or Fused lasso \cite{liu2010}, 
in which $B$ has a simple structure.
A recent method for the general case \cite{becker_candes}
which builds on Nesterov's $O\left(\frac{1}{T}\right)$
smoothing technique \cite{nesterov2005smooth}
does not have publicly available software yet. 

Another advantage of Algorithm \ref{alg:acc} which we
highlight is the high efficiency of Picard iterations
for computing different proximity operators. This 
requires only a small number of iterations regardless
of the size of the problem. 
We also report a roughly linear scalability with
respect to the dimensionality of the problem, which shows
that our methodology can be applied to large scale problems.

In the following simulations, we have chosen the parameter from
Opial's theorem $\kappa=0.2$.  The parameter $\lambda$ was set equal
to $\frac{2L}{ \lambda_{\max} + \lambda_{\min}}$, where $\lam_{\rm
  max}$ and $\lam_{\rm min}$ are the largest and smallest eigenvalues,
respectively, of $\frac{1}{L}BB\trans$.  We have focused exclusively
on the case of the square loss and we have computed $L$ using singular
value decomposition (if this were not possible, a Frobenius estimate
could be used).  Finally, the implementation ran on a 16GB memory dual
core Intel machine. The Matlab code is available at
\texttt{http://ttic.uchicago.edu/$\sim$argyriou/code/\\index.html}.

\begin{figure}[t]
\begin{center}
\includegraphics[width=0.4\textwidth,height=0.3\textwidth]{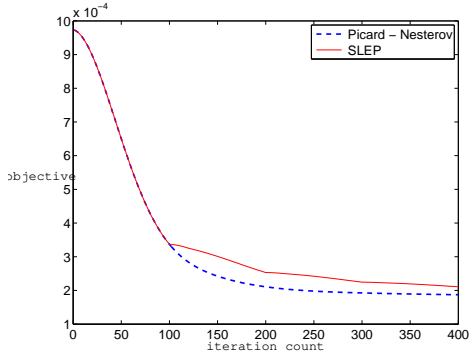}
\caption{Objective function vs. iteration for the overlapping groups data ($d=3500$).
Note that Picard-Nesterov terminates earlier within $\varepsilon$.} 
\label{fig:costs}
\end{center}
\end{figure}

\begin{figure}[t]
\begin{center}
\includegraphics[width=0.4\textwidth]{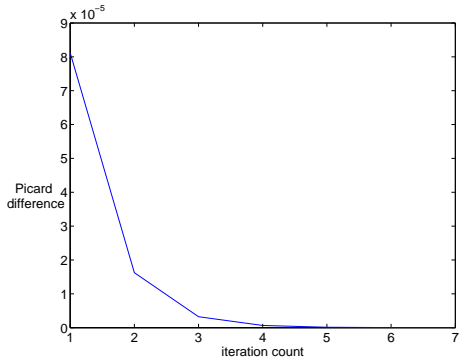}
\caption{$\ell_2$ difference of successive Picard iterates vs. Picard iteration
for the overlapping groups data ($d=3500$).}
\label{fig:fp}
\end{center}
\end{figure}

\subsection{Overlapping Groups}
\label{sec:5.1}
In the first simulation we considered a synthetic
data set which involves a fairly simple group topology
which, however, cannot be embedded as a hierarchy.
We generated data $A\in\R^{s\times d}$, with $s=[0.7d]$ from 
a uniform distribution and normalized the matrix. 
The target vector $x^*$ was also generated randomly 
so that only $21$ of its components are nonzero.
The groups used in the regularizer $\reg_{GL}$ 
-- see eq. \eqref{eq:GL} -- are: $\{1,...,5\}$, $\{5,...,9\}$,
$\{9,...,13\}$, $\{13,...,17\}$, $\{17,...,21\}$, $\{4,22,...,30\}$,
$\{8,31,...,40\}$, $\{12,41,...,50\}$, $\{16,51,...,60\}$, 
$\{20,61,...,70\}$, $\{71,...,80\}, \dots, \{d-9,...,d\}$.


That is, the first $5$ groups form a chain,
the next $5$ groups have a common element
with one of the first groups and the rest
have no overlaps.
An issue with overlapping group norms
is the coefficients assigned to each group
(see \cite{Jenatton} for a discussion).
We chose to use a coefficient of $1$
for every group and compensate by 
normalizing each component of
$x^*$ according to the number of groups
in which it appears (this of course can only be done
in a synthetic setting like this).
The outputs were then generated as
$y = Ax^* + ~\text{noise}$ with
zero mean Gaussian noise of standard deviation
$0.001$.

\begin{figure}[t]
\begin{center}
\includegraphics[width=0.4\textwidth]{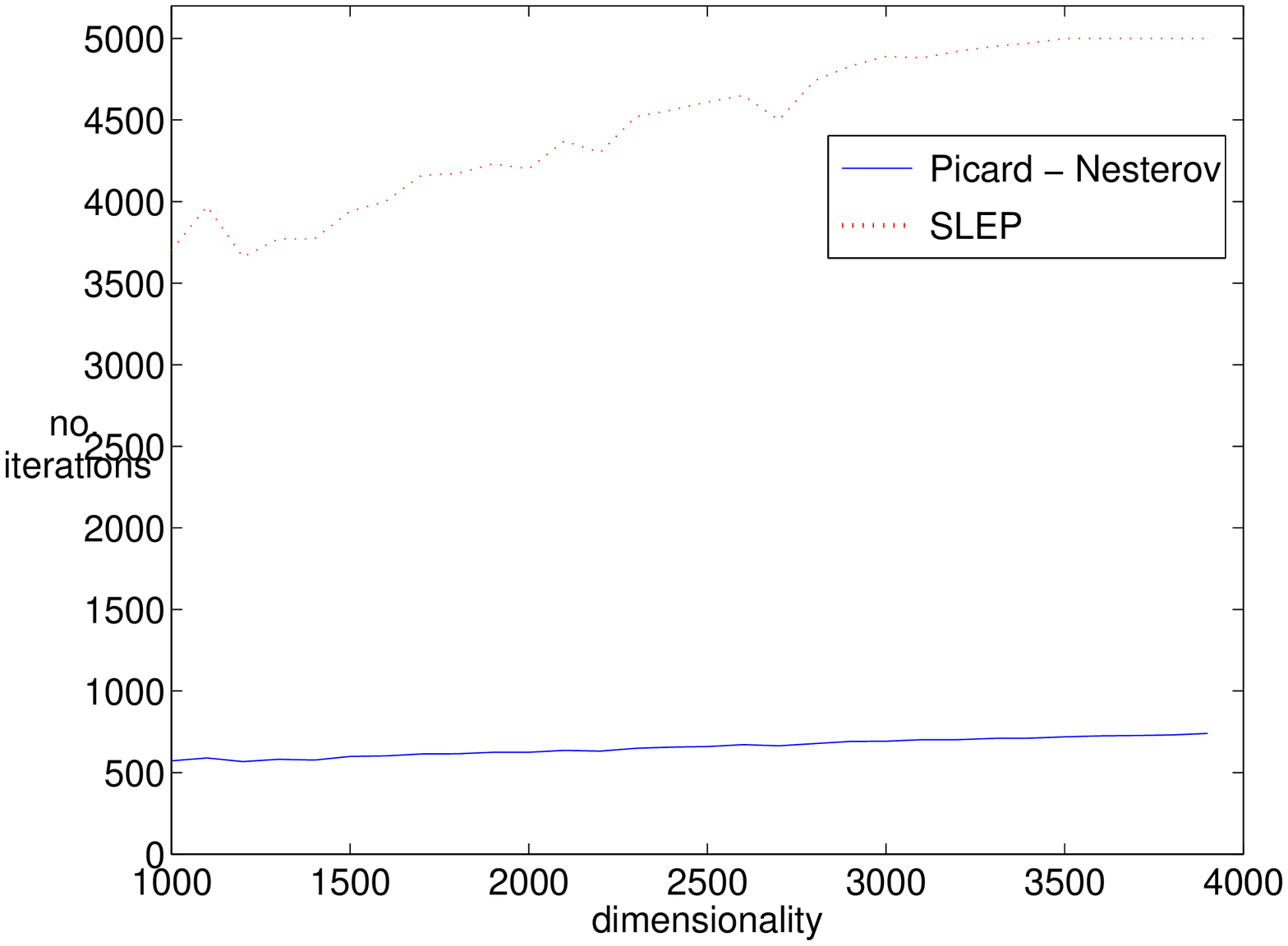}
\includegraphics[width=0.4\textwidth]{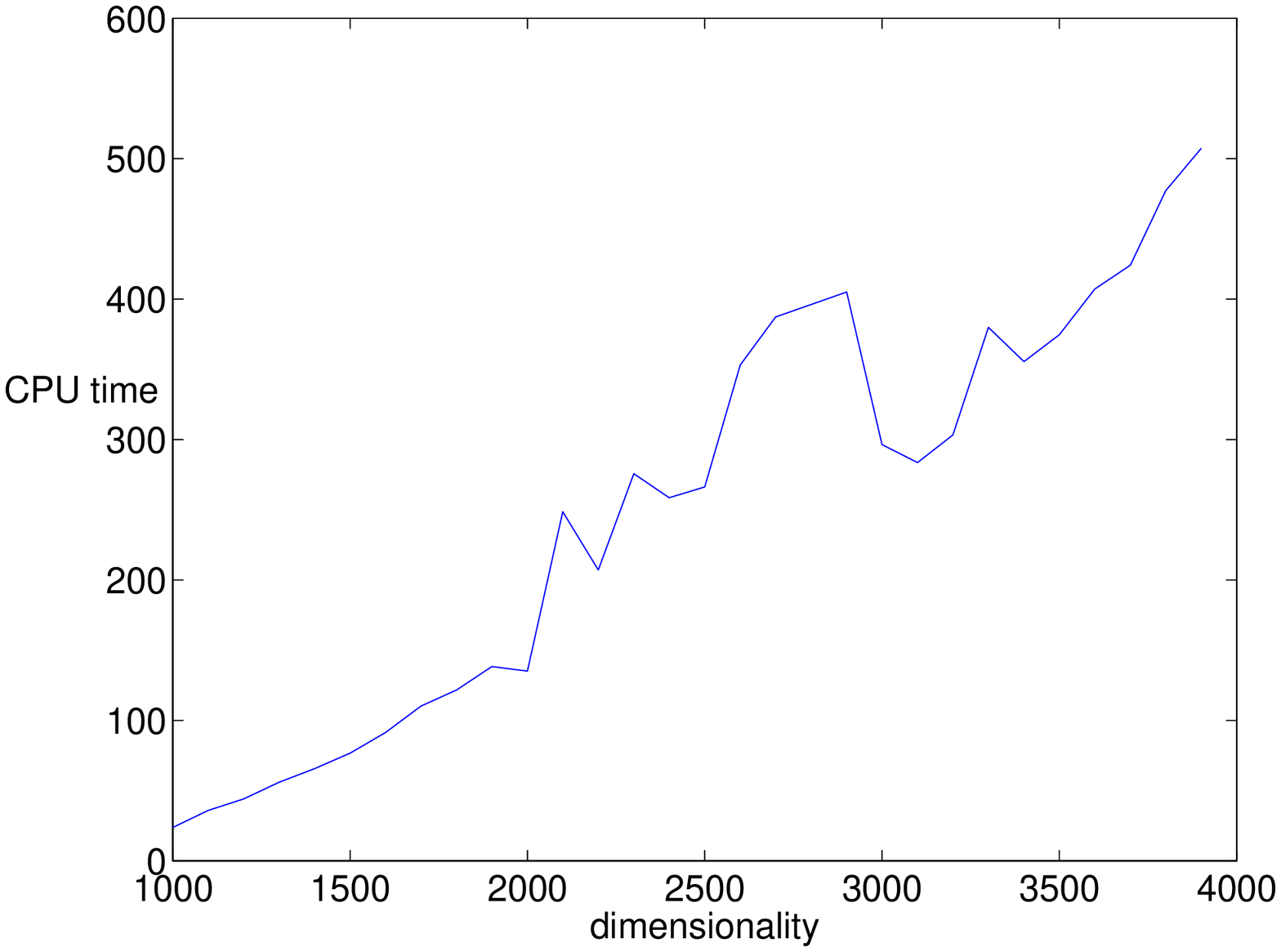}
\caption{Average measures vs. dimensionality for the overlapping groups data.
Top: number of iterations. 
Bottom: CPU time. Note that this time can be reduced to a fraction with a C implementation.}
\label{fig:average}
\end{center}
\end{figure}

\begin{figure}[t]
\begin{center}
\includegraphics[width=0.4\textwidth]{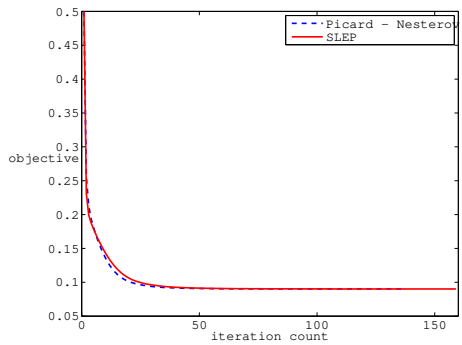}
\caption{Objective function vs. iteration for the hierarchical overlapping groups.}
\label{fig:tree}
\end{center}
\end{figure}

We used a regularization parameter equal to
$10^{-5}$. We ran the algorithm for
$d=1000, 1100, \dots,$ $4000$, with $10$ random data sets 
for each value of $d$, 
and compared its efficiency with SLEP.
The solutions found recover the correct pattern
without exact zeros due to the regularization.
Figure \ref{fig:costs} shows the number
of iterations $T$ in Algorithm \ref{alg:acc}
needed for convergence in objective value within $\varepsilon=10^{-8}$.
SLEP was run until the same objective value was reached.
We conclude that we outperform SLEP's $O\left(\frac{1}{T}\right)$
method.  Figure \ref{fig:fp} demonstrates the efficiency of the inner
computation of the proximity map at one iteration $t$ of the
algorithm. Just a few Picard iterations are required for
convergence. The plots for different $t$ are indistinguishable.

Similar conclusions can be drawn from the plots in Figure
\ref{fig:average}, where average counts of iterations
and CPU time are shown for each value of $d$. 
We see that the number of iterations depends almost linearly
on dimensionality and that SLEP requires an order of
magnitude more iterations -- which grow at a higher rate. Note also that the cost
per iteration is comparable between the two methods.
We also observed that computation of the 
proximity map is insensitive to the size of the problem (it only requires
$7-8$ iterations for all $d$).
Finally, we report that CPU time grows linearly with dimensionality.
To remove various overheads this estimate was obtained from 
Matlab's profiling statistics for the low-level functions called.
A comparison with SLEP is meaningless since the latter is
a C implementation.

Besides outperforming the $O(\frac{1}{T})$ method, we also show that
the Picard-Nesterov approach matches SLEP's $O(\frac{1}{T^2})$ method
for the tree structured Group Lasso \cite{tree}. To this end, we have
imitated an experiment from \cite[Sec.~4.1]{hierarchical} using the Berkeley
segmentation data
set\footnote{http://www.eecs.berkeley.edu/Research/Projects/CS/vision/bsds/}.
We have extracted a random dictionary of $71$ $16 \times 16$ patches
from these images, which we have placed on a balanced tree with
branching factors $10, 2, 2$ (top to bottom). Here the groups
correspond to all subtrees of this tree. We have then learned the
decomposition of new test patches in the dictionary basis by Group
Lasso regularization \eqref{eq:GL}. As Figure \ref{fig:tree} shows,
our method and SLEP are practically indistinguishable.


\subsection{Graph Prediction}
\label{sec:5.2}
The second simulation is on the graph prediction of \cite{mark09}
in the limit of $p=1$ (composite $\ell_1$). We constructed a
synthetic graph of $d$ vertices, $d=100,120, \dots, 360$ with two clusters of equal
size. The edges in each cluster were selected from a uniform draw
with probability $\frac{1}{2}$ and we explicitly connected $d/25$ 
pairs of vertices between the clusters. The labeled data $y$ were
the cluster labels of $s=10$ randomly drawn vertices.
Note that the effective dimensionality of this problem is $O(d^2)$.
At the time of the paper's writing there is not an accelerated method
with software available online which handles a generic graph.

%
First, we observed that the solution found recovered perfectly
the clustering. 
Next, we studied the
decay of the objective function for different problem sizes
(Figure \ref{fig:cost2}). We noted a striking difference from
the case of overlapping groups in that convergence now is not
monotone\footnote{
There is no monotonicity guarantee for Nesterov's accelerated
method.} The nature of decay also differs from graph to graph,
with some cases making fast progress very close to the optimal
value but long before eventual convergence. This observation
suggests future modifications of the algorithm which can 
accelerate convergence by a factor. As an indication,
the distance from the optimum was just $2.2\cdot 10^{-6}, 5.4\cdot 10^{-5}, 1.5\cdot 10^{-5}$ 
at iteration $611, 821, 418$ for $d=100,120,140$, respectively.
We verified in this data as well, that Picard iterations converge very fast
(Figure \ref{fig:fp2}). Finally in Table \ref{tab:graph}
we report average iteration numbers and running times.
These prove the feasibility of solving problems
with large matrices $B$ even using a ``quick and dirty'' 
Matlab implementation.

\begin{figure}[t]
\begin{center}
\includegraphics[width=0.4\textwidth]{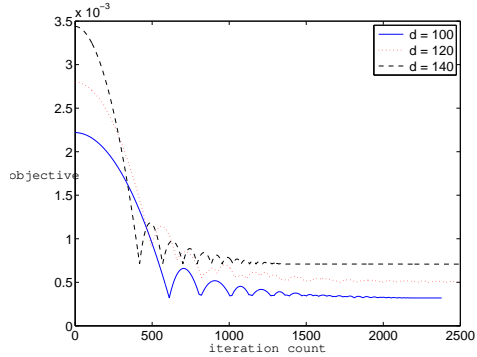}
\caption{Objective function vs. iteration for the graph data.
Note the progress in the early stages in some cases.}
\label{fig:cost2}
\end{center}
\end{figure}

\begin{figure}[t]
\begin{center}
\includegraphics[width=0.4\textwidth]{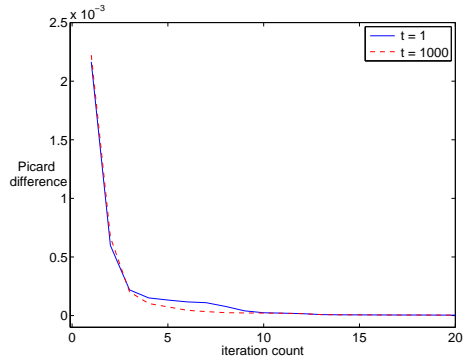}
\caption{$\ell_2$ difference of successive Picard iterates vs. Picard iteration
for the graph data ($d=100$).}
\label{fig:fp2}
\end{center}
\end{figure}

\begin{table}[t]
\label{tab:graph}
\begin{center}
{
\begin{tabular}{|c|c|c|}
\hline $d$ & no. iterations & CPU time (secs.) \\
\hline
100 & 2599.6 & 21.461 \\ 120&                   3680.0 & 54.745\\
140 &                    4351.8 & 118.61\\ 160 &                    3124.8 & 164.21\\
180 &2845.8 & 241.69 \\ 200&                    3476.2 & 359.75 \\
220&                      4490.0 & 911.67 \\
240 & 4490.0 & 911.67 \\ 260 & 3639.2 & 930.8 \\
\hline
\end{tabular}
}
\vspace{-1.5ex}
\end{center}
\caption{Graph data. Note that the effective $d$ is $O(d^2)$. 
CPU time can be reduced to a fraction with a C implementation.}
\end{table}

\begin{figure}[t]
\begin{center}
\includegraphics[width=0.4\textwidth]{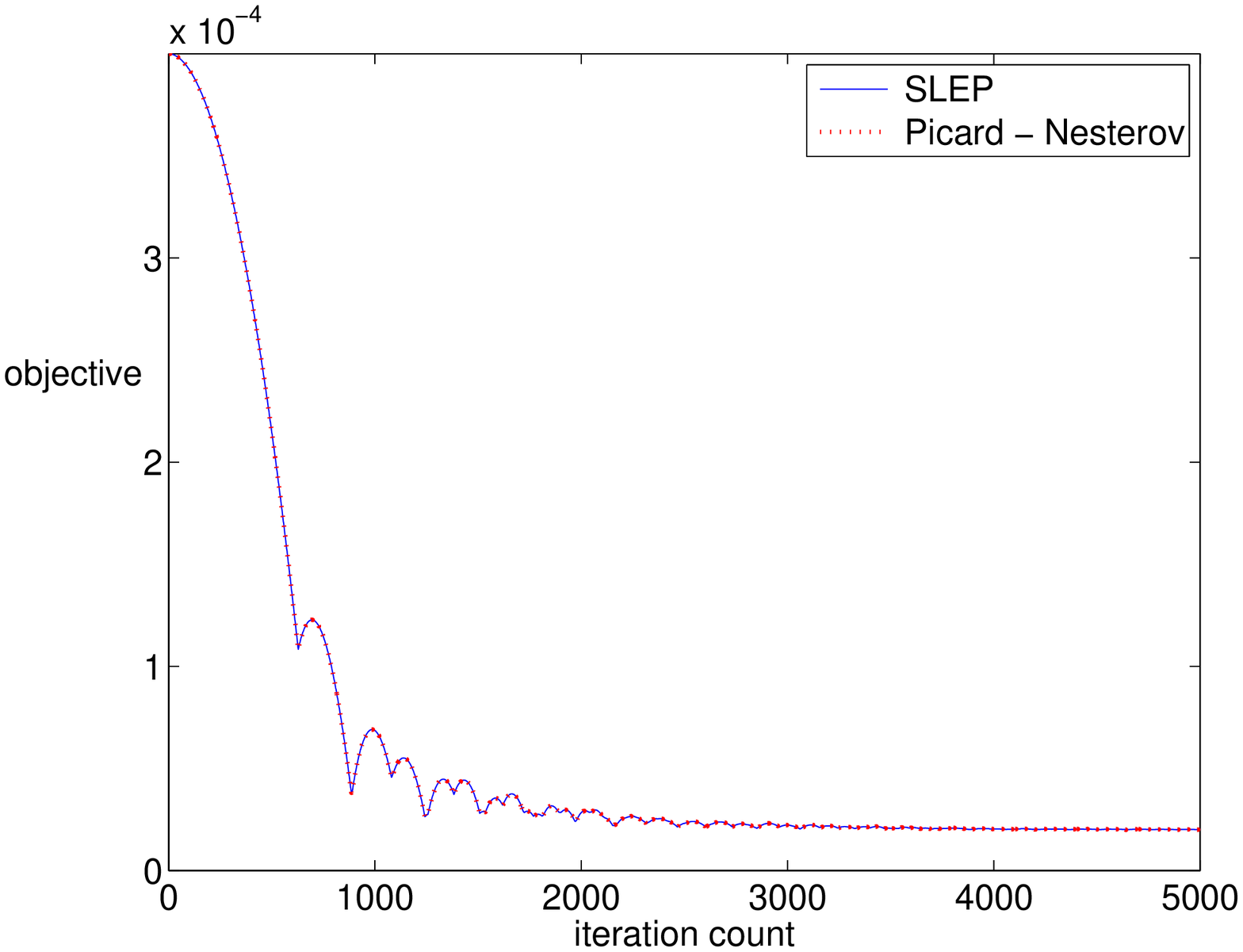}
\caption{Objective function vs. iteration for the Fused Lasso ($d=100$).
The two trajectories are identical.}
\label{fig:fused}
\end{center}
\end{figure}

In addition to a random incidence matrix, one may consider
the special case of {\em Fused Lasso} or {\em Total Variation}
in which $B$ has the simple form \eqref{eq:fused}.  
It has been shown how to achieve the optimal $O\left(\frac{1}{T^2}\right)$
rate for this problem in \cite{beck09}. We applied Fused Lasso (without Lasso regularization) 
to the same clustering data
as before and compared SLEP with the Picard-Nesterov approach.
As Figure \ref{fig:fused} shows, the two trajectories are identical.
This provides even more evidence in favor of optimality of
our method.


\section{Conclusion}
\label{sec:6}
We presented an efficient first order method for solving a class of
nonsmooth optimization problems, whose objective function is given by
the sum of a smooth term and a nonsmooth term, which is obtained by
linear function composition. The prototypical example covered by this
setting in a linear regression regularization method, in which the
smooth term is an error term and the nonsmooth term is a regularizer
which favors certain desired parameter vectors. An important
feature of our approach is that it can deal with richer classes of
regularizers than current approaches and at the same time is at least
as computationally efficient as specific existing approaches for
structured sparsity. In particular our numerical simulations
demonstrate that the proposed method matches optimal $O(\frac{1}{T^2})$
methods on specific problems (Fused Lasso and tree
structured Group Lasso) while
improving over available $O(\frac{1}{T})$ methods for the overlapping
Group Lasso. In addition, it can handle generic linear composite
regularization problems, for many of which accelerated methods do
not yet exist.  
In the future, we wish to study theoretically whether the rate of
convergence is $O\left(\frac{1}{T^2}\right)$, as suggested by our
numerical simulations. There is also much room for further
acceleration of the method in the more challenging cases by using
practical heuristics.  At the same time, it will be valuable to study
further applications of our method. These could include machine
learning problems ranging from multi-task learning, to multiple kernel
learning and to dictionary learning, all of which can be formulated as
linearly composite regularization problems.


\subsubsection*{Acknowledgements}
We wish to thank Luca Baldassarre and Silvia Villa for 
useful discussions.
This work was supported by Air Force Grant AFOSR-FA9550, EPSRC Grants
EP/D071542/1 and EP/H027203/1, NSF Grant ITR-0312113, Royal Society
International Joint Project Grant 2012/R2, as well as by the IST
Programme of the European Community, under the PASCAL Network of
Excellence, IST-2002-506778.


\section{Appendix}

In this appendix, we collect some basic facts about fixed point theory
which are useful for our study. For more information on the material presented here, 
we refer the reader to \cite{zalinescu}. 

Let $X$ be a closed subset of $\R^d$. A mapping $\vp: X \rightarrow X$
is called strictly non-expansive (or contractive) if there exists
$\lam \in [0,1)$ such that, for every $x,y\in X$,
$$
\|\vp(x)-\vp(y)\|\leq \lam \|x-y\|.
$$ 
If the above inequality holds for $\lambda =1$, the mapping is called
nonexpansive.  We say that $x$ is a {\em fixed point} of $\vp$ if
$x=\vp(x)$.  The Picard iterates $x^n, n \in \N$ starting at $x_0 \in
X$ are defined by the recursive equation $x^{n} = \vp(x^{n-1})$.

It is a well-knwon fact that, if $\vp$ is strictly nonexpansive then
$\vp$ has a unique fixed point $x$ and $\lim_{n\rightarrow \infty} x^n
= x$.  However, this result fails if $\vp$ is nonexpansive. For
example, the map $\vp(x) = x+1$ does not have a fixed point. On the
other hand, the identity map has infinitely many fixed points.


\begin{definition}
Let $X$ be a closed subset of $\R^d$. A map $\vp: X \rightarrow X$ is 
called asymptotically regular provided that $\lim_{n \rightarrow \infty} \|x^{n+1}-x^n\|=0$.
\end{definition}

\begin{proposition}
Let $X$ be a closed subset of $\R^d$ and $\vp: X \rightarrow X$ such that 
\begin{enumerate}
\item $\vp$ is nonexpansive;
\item $\vp$ has at least one fixed point;
\item $\vp$ is asymptotically regular.
\end{enumerate}
Then the sequence $\{x^n: n \in \N\}$ converges to a fixed point of $\vp$.
\label{prop:1}
\end{proposition}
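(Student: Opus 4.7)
The plan is to combine the three hypotheses in a standard way: use the existence of a fixed point to get monotone control of distances and hence boundedness, extract a subsequential limit via Bolzano--Weierstrass, show this limit is itself a fixed point using asymptotic regularity plus continuity of $\vp$, and finally upgrade subsequential convergence to convergence of the whole sequence.

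First I would fix a point $x^\ast \in X$ with $\vp(x^\ast) = x^\ast$ (which exists by hypothesis~2) and observe that, by nonexpansiveness,
\[
\|x^{n+1}-x^\ast\| \;=\; \|\vp(x^n)-\vp(x^\ast)\| \;\le\; \|x^n-x^\ast\|.
\]
Hence $\{\|x^n-x^\ast\|\}$ is non-increasing, so $\{x^n\}$ is bounded in $\R^d$. By Bolzano--Weierstrass there is a subsequence $x^{n_k} \to y$ for some $y \in \R^d$; since $X$ is closed, $y \in X$.

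Next I would show that $y$ is a fixed point of $\vp$. Nonexpansiveness implies $\vp$ is (Lipschitz) continuous, so $\vp(x^{n_k}) \to \vp(y)$. On the other hand, $\vp(x^{n_k}) = x^{n_k+1}$, and by asymptotic regularity (hypothesis~3)
\[
\|x^{n_k+1}-x^{n_k}\| \;\longrightarrow\; 0,
\]
so $x^{n_k+1} \to y$ as well. Combining these two limits gives $\vp(y)=y$.

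Finally I would conclude that the whole sequence $\{x^n\}$ converges to $y$. Applying the monotonicity argument from the first step with $x^\ast$ replaced by the fixed point $y$ shows that $\{\|x^n-y\|\}$ is non-increasing; since it has a subsequence $\|x^{n_k}-y\|$ tending to $0$, the whole sequence tends to $0$, i.e.\ $x^n \to y$. The only mildly delicate step is the middle one, where both asymptotic regularity and continuity of $\vp$ are needed simultaneously to pass to the limit along the subsequence; the monotonicity argument at the end is what makes subsequential convergence suffice, and it is the reason hypothesis~2 is used twice in the proof.
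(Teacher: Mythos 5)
Your proof is correct and follows essentially the same three-step route as the paper's: boundedness from the non-increasing distance to a fixed point, identification of a subsequential limit as a fixed point via continuity plus asymptotic regularity, and the F\'ejer-type monotonicity argument to upgrade subsequential convergence to full convergence. The only (welcome) additions are your explicit appeal to Bolzano--Weierstrass and the remark that closedness of $X$ puts the limit back in $X$, which the paper leaves implicit.
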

\begin{proof}
We divide the proof in three steps. 

{\em Step 1:} The Picard iterates are bounded. Indeed, let $x$ be a fixed point of $\vp$. We have that
$$
\|x^{n+1}-x\| = \|\vp(x^{n})-\vp(x)\| \leq \|x^{n}-x\| \leq \cdots \leq \|x^0-x\|.
$$ 

{\em Step 2:} Let $\{x^{n_k}: k \in \N\}$ be a convergent
subsequence, whose limit we denote by $y$. We will show that $y$ is a
fixed point of $\vp$. Since $\vp$ is continuous, we have that
$\lim_{k \rightarrow \infty} (x^{n_k}-\vp(x^{n_k})) = y - \vp(y)$, and
since $\vp$ is asymptotically regular $y-\vp(y) = 0$.  

{\em Step 3:} The
whole sequence converges. Indeed, following the same reasoning in the
proof of Step 1, we conclude that the sequence $\{\|x^{n}-y\|: n \in
\N\}$ is non-increasing. Let $\alpha = \lim_{n\rightarrow \infty}
\|x^n-y\|$. Since $\lim_{k\rightarrow \infty} \|x^{n_k}-y\| = 0$,
we conclude that $\alpha=0$ and, so, $\lim_{n \rightarrow \infty} x^n
= y$.
\end{proof}
We note that in general, without the asymptotically regularity
assumption, the Picard iterates do not converge. For example,
consider $\vp(x) = -x$. Its only fixed point is $x=0$; if we start
from $x^0 \neq 0$ the Picard iterates will oscillate. Moreover, if
$\vp(x) = x+1$, which is nonexpansive, the Picard iterates diverge.

We now discuss the main tool which we use to find a fixed point of a
nonexpansive mapping $\vp$.
\begin{theorem} (Opial $\ka$-average theorem \cite{opial})
Let $X$ be a closed convex subset of~$\R^d$, $\vp: X \rightarrow X$ a
nonexpansive mapping, which has at least one fixed point and let
$\vp_\ka := \ka I + (1-\ka) \vp$.  Then, for every $\ka \in (0,1)$,
the Picard iterates of $\vp_\ka$ converge to a fixed point of $\vp$.
\label{thm:opial_app}
\end{theorem}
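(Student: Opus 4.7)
The plan is to reduce Opial's theorem to Proposition \ref{prop:1} by showing that $\vp_\ka$ is a nonexpansive, asymptotically regular self-map of $X$ whose fixed point set coincides with that of $\vp$. Since the Picard iterates in the statement are those of $\vp_\ka$, this reduction is the natural route.

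First I would verify three easy points. (i) $\vp_\ka$ maps $X$ into $X$: this uses that $X$ is convex, so the convex combination $\ka x+(1-\ka)\vp(x)$ stays in $X$. (ii) $\vp_\ka$ is nonexpansive: for $x,y\in X$,
\[
\|\vp_\ka(x)-\vp_\ka(y)\|_2\leq \ka\|x-y\|_2+(1-\ka)\|\vp(x)-\vp(y)\|_2\leq \|x-y\|_2.
\]
(iii) $\vp$ and $\vp_\ka$ share the same fixed points, since $x=\ka x+(1-\ka)\vp(x)$ iff $x=\vp(x)$. In particular, $\vp_\ka$ has at least one fixed point by hypothesis on $\vp$.

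The main obstacle is the third ingredient of Proposition \ref{prop:1}: asymptotic regularity of $\vp_\ka$. Writing $y^n$ for the Picard iterates of $\vp_\ka$ starting at $y^0\in X$, the identity $y^{n+1}-y^n=(1-\ka)\bigl(\vp(y^n)-y^n\bigr)$ reduces the task to proving $\|\vp(y^n)-y^n\|_2\to 0$. My plan is the standard Krasnoselskii--Mann energy argument. Fix a fixed point $x^*$ of $\vp$ and use the parallelogram-type identity
\[
\|\ka a+(1-\ka)b\|_2^2=\ka\|a\|_2^2+(1-\ka)\|b\|_2^2-\ka(1-\ka)\|a-b\|_2^2
\]
with $a=y^n-x^*$ and $b=\vp(y^n)-x^*$. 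Combined with $\|\vp(y^n)-x^*\|_2\leq \|y^n-x^*\|_2$ (nonexpansiveness of $\vp$), this yields
\[
\|y^{n+1}-x^*\|_2^2\leq \|y^n-x^*\|_2^2-\ka(1-\ka)\|\vp(y^n)-y^n\|_2^2.
\]
Telescoping gives $\ka(1-\ka)\sum_{n\geq 0}\|\vp(y^n)-y^n\|_2^2\leq \|y^0-x^*\|_2^2<\infty$, which forces $\|\vp(y^n)-y^n\|_2\to 0$, i.e.\ asymptotic regularity of $\vp_\ka$.

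With (i)--(iii) and asymptotic regularity in hand, Proposition \ref{prop:1} applies to $\vp_\ka$ on the closed set $X$: the Picard iterates of $\vp_\ka$ converge to some $y\in X$ with $y=\vp_\ka(y)$, and by (iii) this $y$ is a fixed point of $\vp$, completing the proof. The only subtlety worth flagging is that Proposition \ref{prop:1} was stated for closed subsets of $\R^d$, and we need convexity of $X$ only to ensure $\vp_\ka$ is a self-map; the energy argument and the application of Proposition \ref{prop:1} go through verbatim in finite dimensions.
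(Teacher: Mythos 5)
Your proposal is correct, and it follows the paper's overall architecture --- reduce to Proposition \ref{prop:1} by checking that $\vp_\ka$ is a nonexpansive self-map of $X$ with the same fixed points as $\vp$, then establish asymptotic regularity --- but the mechanism you use for asymptotic regularity is genuinely different. The paper sets $\bar d=\lim_n\|x^n-u\|$, splits into the cases $\bar d=0$ and $\bar d>0$, normalizes the iterates by $\bar d$, and invokes Lemma \ref{lem:B} (built on the strong-convexity inequality of Lemma \ref{lem:A}) to pass to the limit and conclude $x^n-\vp(x^n)\to 0$. You instead use the exact quadratic identity $\|\ka a+(1-\ka)b\|_2^2=\ka\|a\|_2^2+(1-\ka)\|b\|_2^2-\ka(1-\ka)\|a-b\|_2^2$ --- of which Lemma \ref{lem:A} is precisely the inequality form under $\|u\|\le\|w\|$ --- combine it with nonexpansiveness, and telescope to get $\sum_{n}\|\vp(y^n)-y^n\|_2^2<\infty$. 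This is the standard Krasnoselskii--Mann summability argument: it dispenses with the case split and with Lemma \ref{lem:B} entirely, and it yields a strictly stronger, quantitative conclusion (e.g.\ $\min_{n\le N}\|\vp(y^n)-y^n\|_2^2=O(1/N)$) at no extra cost, whereas the paper's limiting argument only gives convergence to zero. You are also right, and slightly more careful than the paper, to flag that convexity of $X$ enters exactly once, to make $\vp_\ka$ a self-map so that Proposition \ref{prop:1} (stated for closed sets) applies.
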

We prepare for the proof with two useful lemmas.

\begin{lemma}
If $\ka \in (0,1)$, $u,w \in \R^d$, $\|u\|\leq \|w\|$, then 
$$
\kappa(1-\kappa) \|w-u\|^2 \leq \|w\|^2 - \|\ka w + (1-\ka) u\|^2
$$
\label{lem:A}
\end{lemma}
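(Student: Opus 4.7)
The plan is to reduce the inequality to a single algebraic identity and then invoke the hypothesis $\|u\|\leq\|w\|$ in one line. The central tool is the well-known convex-combination identity
$$
\|\kappa w+(1-\kappa)u\|^2=\kappa\|w\|^2+(1-\kappa)\|u\|^2-\kappa(1-\kappa)\|w-u\|^2,
$$
which holds for any $\kappa\in\R$ and $u,w\in\R^d$ and is verified by expanding both sides using $\|x+y\|^2=\|x\|^2+2\langle x,y\rangle+\|y\|^2$. This is nothing more than the ``strong convexity'' of $\|\cdot\|^2$ along the segment from $u$ to $w$, so it needs only a one-line check, not a grind.

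Using the identity, I would rewrite
$$
\|w\|^2-\|\kappa w+(1-\kappa)u\|^2=(1-\kappa)\bigl(\|w\|^2-\|u\|^2\bigr)+\kappa(1-\kappa)\|w-u\|^2.
$$
Now the hypothesis $\|u\|\leq\|w\|$ gives $\|w\|^2-\|u\|^2\geq 0$, and since $\kappa\in(0,1)$ we have $1-\kappa>0$, so the first term on the right is nonnegative. Dropping it yields exactly
$$
\|w\|^2-\|\kappa w+(1-\kappa)u\|^2\geq\kappa(1-\kappa)\|w-u\|^2,
$$
which is the claimed bound.

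There is essentially no obstacle here: the only ingredient beyond algebra is the sign of $\|w\|^2-\|u\|^2$, which is handed to us by the hypothesis. The main ``idea'' is simply to recognize the parallelogram-type identity above, after which the lemma falls out immediately. In writing up the formal proof I would just state the identity, substitute, and invoke the hypothesis, with no further casework or estimation required.
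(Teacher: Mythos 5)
Your proof is correct and follows essentially the same route as the paper: both rest on the convex-combination identity $\|\kappa w+(1-\kappa)u\|^2=\kappa\|w\|^2+(1-\kappa)\|u\|^2-\kappa(1-\kappa)\|w-u\|^2$ (the ``$\ell_2$ strong convexity'' the paper cites) and then use $\|u\|\leq\|w\|$ to bound $\kappa\|w\|^2+(1-\kappa)\|u\|^2$ by $\|w\|^2$. Your rearrangement that isolates the nonnegative term $(1-\kappa)(\|w\|^2-\|u\|^2)$ is just an equivalent presentation of that last step.
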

\begin{proof}
The assertion follows from $\ell_2$ strong convexity,
\begin{align*}
\kappa(1-\kappa) \|w-u\|^2 + \|\ka w + (1-\ka) u\|^2 \\= 
\ka \|w\|^2 +(1-\ka) \|u\|^2 
\leq \|w\|^2 \,.
\end{align*} 
\end{proof}



\begin{lemma} 
If $\{u^n : n \in \N\}$ and $\{w^n: n \in \N\}$ are sequences in $\R^d$ 
such that $\lim_{n\rightarrow \infty}\|w^n\| = 1$, $\|u^n\|\leq \|w^n\|$ and
$\lim_{n \rightarrow \infty} \|\ka w^n + (1-\ka)u^n\| = 1$, 
then $\lim_{n\rightarrow \infty} w^n-u^n = 0$.
\label{lem:B}
\end{lemma}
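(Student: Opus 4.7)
The plan is to use Lemma A directly, since it is tailor-made for exactly this kind of squeeze argument. Specifically, because the hypothesis $\|u^n\|\le\|w^n\|$ holds for each $n$, I can apply Lemma A to the pair $(w^n,u^n)$ and obtain the pointwise inequality
\[
\kappa(1-\kappa)\,\|w^n-u^n\|^2 \;\le\; \|w^n\|^2 - \|\kappa w^n + (1-\kappa)u^n\|^2.
\]

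Next, I would take $n\to\infty$ on the right-hand side. By assumption $\|w^n\|\to 1$ and $\|\kappa w^n+(1-\kappa)u^n\|\to 1$, so by continuity of squaring both terms tend to $1$ and the right-hand side tends to $0$. Since $\kappa\in(0,1)$ the constant $\kappa(1-\kappa)$ is strictly positive, so dividing through gives $\|w^n-u^n\|^2\to 0$, which is exactly the conclusion $w^n-u^n\to 0$.

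There is essentially no obstacle here: Lemma A provides the correct quadratic lower bound, and the two limit hypotheses collapse the upper bound to zero, leaving a trivial squeeze argument. The only thing to be careful about is that $\kappa(1-\kappa)>0$ (which is guaranteed by the hypothesis $\kappa\in(0,1)$ inherited from the Opial statement), and that one is allowed to square the two norm limits, which is immediate by continuity. Thus the proof should amount to two lines: one application of Lemma A and one passage to the limit.
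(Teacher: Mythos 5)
Your proposal is correct and follows exactly the paper's own argument: apply Lemma \ref{lem:A} to each pair $(w^n,u^n)$, observe that the right-hand side tends to zero by the two limit hypotheses, and divide by the positive constant $\ka(1-\ka)$. Nothing is missing.
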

\begin{proof}
Apply Lemma \ref{lem:A} to note that
$$
\ka(1-\ka) \|w^n - u^n\|^2 \leq \|w^n\|^2 - \|\ka w^n +(1-\ka)u^n\|^2.
$$
By hypothesis the right hand side tends to zero as $n$ tends to infinity and the result follows.
\end{proof}

\begin{proof}[{\bf Proof of Theorem \ref{thm:opial_app}}]
Let $\{x^n: n \in \N\}$ be the iterates of $\vp_\ka$. We will show that
$\vp_\ka$ is asymptotically regular. The result will then follow by
Proposition \ref{prop:1} and the fact that $\vp_\ka$ and $\vp$ have
the same set of fixed points.

Let $x^{n+1} = \ka x^n + (1-\ka) \vp(x^n)$.
Note that, if $u$ is fixed point of $\vp_\ka$, then 
$$
\|x^{n+1}-u\| \leq \|x^n - u\| \leq \cdots \leq \|x^0-u\| \,.
$$
Let $\bar{d} := \lim_{n\rightarrow \infty} \|x^n-u\|$. If $\bar{d}=0$ the result is proved.
We will show that if $\bar{d}>0$ we contradict the hypotheses of the theorem. 
For every $n \in \N$, we define 
$w^n = \bar{d}^{-1}(x^n-u)$ and $u^n =  \bar{d}^{-1}(\vp(x^n)-u)$. 
Note that the sequences $\{w^n: n \in \N\}$ and 
$\{u^n:n \in \N\}$ satisfy the hypotheses of Lemma \ref{lem:B}. Thus, we have that 
$\lim_{n\rightarrow \infty} (x^n-\vp(x^n))= 0$. Consequently
$x^{n+1}-x^n = (1-\ka) (\vp(x^n)-x^n) \rightarrow 0$,  
showing that $\{x^n:n \in \N\}$ is asymptotically regular.
\end{proof}

\end{document}